\newcommand{\nop}[1]{}
\newcommand{\equationlabel}[1]{\refstepcounter{equation}\label{#1}(\arabic{equation})}
\newcounter{structure}
\newcommand{\structurelabel}[1]{{\refstepcounter{structure}\label{#1}\arabic{structure}}}
\newcounter{bijection}
\newcommand{\bijectionlabel}[1]{{\refstepcounter{bijection}\label{#1}\arabic{bijection}}}
\newcounter{program}
\newcommand{\programlabel}[1]{{\refstepcounter{program}\label{#1}\arabic{program}}}
\newcommand{\capminus}{\mathrel{\ooalign{\hidewidth$\cap$\hidewidth\cr\scalebox{0.6}[1.0]{$-$}}}}
\newcommand{\BP}{\ensuremath{\mathcal{U}}\xspace}
\newcommand{\coNP}{\ensuremath{\mathit{coNP}}\xspace}
\newcommand{\Angstrom}{\mbox{\textit{\AA}}}
\title{Properties of Answer Set Programming with Convex Generalized Atoms}
\author{Mario Alviano \and Wolfgang Faber}
\institute{Department of Mathematics and Computer Science\\
           University of Calabria\\
           87036 Rende (CS), Italy \\
           \email{\{alviano,faber\}@mat.unical.it}
}
\titlerunning{Properties of Answer Set Programming with Convex Generalized Atoms}
\authorrunning{M.~Alviano and W.~Faber} 
\begin{document}
\maketitle

\begin{abstract}
In recent years, Answer Set Programming (ASP), logic programming under
the stable model or answer set semantics, has seen several extensions
by generalizing the notion of an atom in these programs: be it
aggregate atoms, HEX atoms, generalized quantifiers, or abstract
constraints, the idea is to have more complicated satisfaction
patterns in the lattice of Herbrand interpretations than traditional,
simple atoms. In this paper we refer to any of these constructs as
generalized atoms. Several semantics with differing characteristics
have been proposed for these extensions, rendering the big picture
somewhat blurry. In this paper, we analyze the class of programs that
have convex generalized atoms (originally proposed by Liu and
Truszczy{\'n}ski in \cite{liu-trus-2006-jair}) in rule bodies and show
that for this class many of the proposed semantics coincide. This is
an interesting result, since recently it has been shown that this
class is the precise complexity boundary for the FLP semantics. We
investigate whether similar results also hold for other semantics, and
discuss the implications of our findings.
\end{abstract}

\section{Introduction}

Various extensions of the basic Answer Set Programming language have
been proposed by allowing more general atoms in rule bodies, for
example aggregate atoms, HEX atoms, dl-atoms, generalized quantifiers,
or abstract constraints. A number of semantics have been proposed for
such programs, most notably the FLP semantics
\cite{fabe-etal-2011-aij} and a number of coinciding semantics that we
will collectively refer to as PSP semantics
(from Pelov, Son, and Pontelli) \cite{pelo-2004,son-pont-2007-tplp}. All of these semantics coincide
with traditional ASP semantics when no generalized atoms are
present. Moreover, they coincide on programs that have atomic rule
heads and contain only monotonic generalized atoms. In
\cite{liu-etal-2010-aij} it is furthermore hinted that the semantics
also coincide on programs that have atomic rule heads and contain only
convex generalized atoms. However, no formal proof is available for this
claim, and the informal explanation given in \cite{liu-etal-2010-aij}
is not as general as it could be, as we will show.

In this paper, we undertake a deeper investigation on the similarities
and differences between the FLP and PSP semantics. In order to do
this, we consider a simplified, yet expressive propositional language:
sets of rules with atomic heads and bodies that are formed of a single
``structure,'' which are functions mapping interpretations to Boolean
values\footnote{Note that (apart from the name) there is no connection to structures in first-order logic.}. Clearly, structures encompass atoms, literals, and
conjunctions thereof, but can represent any propositional formula,
generalized atom, or conjunctions of generalized atoms. Each structure
has an associated domain, which is the set of propositional atoms on
which the structure's truth valuation depends. We can then classify
the structures by their semantic properties, in particular, we will
focus on the class of convex structures, which have single contiguous
areas of truth in the lattice of interpretations. Convex structures
include atoms and literals, and they are closed under conjunction (but
not under negation or disjunction).

We first formally prove the claim that the FLP and PSP semantics
coincide on programs with convex structures, as originally reported in
\cite{liu-etal-2010-aij}. We will then move on to the main focus of
this paper, trying to understand whether there is any larger class for
which the semantics coincide. It is known that for programs with
general structures all PSP answer sets are FLP answer sets, but not
all FLP answer sets are PSP answer sets. The precise boundary for
exhibiting the semantic difference is instead unknown.

We will approach this question using complexity arguments. Recently,
we could show that convex structures form the precise boundary for a
complexity jump in the polynomial hierarchy on cautious reasoning (but
most other decision problems as well) for the FLP semantics. Cautious
reasoning is $\Pi^P_2$-complete for the FLP semantics when allowing
any non-convex structure and its variants (renaming atoms) in the
input program, but it is $\coNP$-complete for convex structures. When
considering the PSP semantics, cautious reasoning is also
$\Pi^P_2$-complete when allowing any kind of structures in the
input. This follows from a result in \cite{pelo-2004}, and we provide
an alternative proof in this paper. Analyzing this proof, it becomes
clear that there is a different source of complexity for PSP than for
FLP.

We then show that this different source of complexity also yields a
different shape of the boundary for the complexity jump in
PSP. Indeed, we first show that for a simple non-convex structure,
cautious reasoning is still in $\coNP$ for the PSP semantics, while
the problem is $\Pi^P_2$-hard in the presence of this structure for
the FLP semantics. It turns out that the same argument works for many
non-convex structures, in particular, for all structures with a domain
size bounded by a constant. The domain size therefore serves as a
parameter that simplifies the complexity of the problem for the PSP
semantics (unless the polynomial hierarchy collapses to its first
level). This also means that the complexity boundary for PSP has a
non-uniform shape, in the sense that an infinite number of different
non-convex structures must be available for obtaining
$\Pi^P_2$-hardness for cautious reasoning. This is in contrast to the
FLP semantics, where the presence of a single non-convex structure is
sufficient.

\section{Syntax and Semantics} \label{sec:syntax}

In this section we first introduce the syntax used in the paper.
This is mainly based on the notion of structures, i.e., functions mapping interpretations into Boolean truth values.
Then, we introduce few semantic notions and in particular we characterize structures in terms of monotonicity.
Finally, we define the two semantics analyzed in this paper, namely FLP and PSP.

\subsection{Syntax}

Let $\BP$ be a fixed, countable set of propositional atoms.
An interpretation $I$ is a subset of $\BP$.
A structure $S$ on $\BP$ is a mapping of interpretations into Boolean truth values.
Each structure $S$ has an associated, finite domain $D_S \subset \BP$, indicating those atoms that are relevant to the structure.

\begin{example}\label{ex:structures}
A structure $S_\structurelabel{structure:1}$ modeling a conjunction $a_1, \ldots, a_n$ ($n \geq 0$) of propositional atoms is such that $D_{S_{\ref{structure:1}}} = \{a_1, \ldots, a_n\}$ and, for every interpretation $I$, $S_{\ref{structure:1}}$ maps $I$ to true if and only if $D_{S_{\ref{structure:1}}} \subseteq I$.

A structure $S_\structurelabel{structure:2}$ modeling a conjunction $a_1, \ldots, a_m, not\ a_{m+1}, \ldots, not\ a_n$ ($n \geq m \geq 0$) of literals, where $a_1, \ldots, a_n$ are propositional atoms and $not$ denotes \emph{negation as failure}, is such that $D_{S_{\ref{structure:2}}} = \{a_1, \ldots, a_n\}$ and, for every interpretation $I$, $S_{\ref{structure:2}}$ maps $I$ to true if and only if $\{a_1, \ldots, a_m\} \subseteq I$ and $\{a_{m+1}, \ldots, a_n\} \cap I = \emptyset$.

A structure $S_\structurelabel{structure:3}$ modeling an aggregate $\mathit{COUNT}(\{a_1, \ldots, a_n\}) \neq k$ ($n \geq k \geq 0$), where $a_1, \ldots, a_n$ are propositional atoms, is such that $D_{S_{\ref{structure:3}}} = \{a_1, \ldots, a_n\}$ and, for every interpretation $I$, $S_{\ref{structure:3}}$ maps $I$ to true if and only if $|D_{S_{\ref{structure:3}}} \cap I| \neq k$.
\end{example}

A general rule $r$ is of the following form:
\begin{equation}\label{eq:rule}
H(r) \leftarrow B(r)
\end{equation}
where $H(r)$ is a propositional atom in $\BP$ referred as the head of
$r$, and $B(r)$ is a structure on $\BP$ called the body of $r$.
A general program $P$ is a set of general rules.

\begin{example}\label{ex:program}
Let $S_{\structurelabel{structure:4}}$ map to true any interpretation $I$ such that $I \cap \{a,b\} \neq \{b\}$,
and let $S_{\structurelabel{structure:5}}$ map to true any interpretation $I$ such that $I \cap \{a,b\} \neq \{a\}$.
Hence, program
$P_{\programlabel{program:1}} = \{a \leftarrow S_{\ref{structure:4}}; b \leftarrow S_{\ref{structure:5}}\}$ 
is equivalent to the following program with aggregates:
\begin{eqnarray*}
a & \leftarrow & \mathit{SUM}(\{a = 1, b = -1\}) \geq 0\\
b & \leftarrow & \mathit{SUM}(\{a = -1, b = 1\}) \geq 0
\end{eqnarray*}
\end{example}

Note that no
particular assumption is made on the syntax of rule bodies; in the case of
normal propositional logic programs these structures are conjunctions
of literals. We assume that structures are closed under propositional
variants, that is, if $S$ is a structure, for any bijection $\sigma: \BP \rightarrow \BP$,
also $S\sigma$ is a structure, and the
associated domain is $D_{S\sigma} = \{\sigma(a) \mid a \in D_S\}$.

\begin{example}\label{ex:variants}
Consider $S_{\ref{structure:4}}$ and $S_{\ref{structure:5}}$ from Example~\ref{ex:program}, and a bijection $\sigma_\bijectionlabel{bijection:1}$ such that $\sigma_{\ref{bijection:1}}(a) = b$.
Hence, $S_{\ref{structure:5}} = S_{\ref{structure:4}}\sigma_{\ref{bijection:1}}$, that is, $S_{\ref{structure:5}}$ is a variant of $S_{\ref{structure:4}}$.
%
\end{example}

Given a set of structures $\mathbf{S}$, by datalog$^\mathbf{S}$ we
refer to the class of programs that may contain only the following rule bodies:
structures corresponding to conjunctions of atoms, any structure $S \in \mathbf{S}$, or any
of its variants $S\sigma$.

\begin{example}\label{ex:classes}
For every $n \geq m \geq 0$, let $S^{m,n}$ denote the structure $S_{\ref{structure:2}}$ from Example~\ref{ex:structures}.
The class of normal datalog programs is datalog$^{\{S^{m,n} \mid n \geq m \geq 0\}}$.
\end{example}

Note that this syntax does not explicitly allow for negated structures. One can, however, choose the complementary structure for simulating negation. This would be akin to the ``negation as complement'' interpretation of negated aggregates that is prevalent in the literature.

\subsection{Semantics}

Let $I \subseteq \BP$ be an interpretation.
$I$ is a model for a structure $S$, denoted $I \models S$, if $S$ maps $I$ to true.
Otherwise, if $S$ maps $I$ to false, $I$ is not a model of $S$, denoted $I \not\models S$. We require that atoms outside the domain of $S$ are irrelevant for modelhood, that is, for any interpretation $I$ and $X \subseteq \BP \setminus D_S$ it holds that $I \models S$ if and only if $I\cup X \models S$. Moreover, for any bijection $\sigma: \BP \rightarrow \BP$, let $I\sigma = \{\sigma(a) \mid a \in I\}$, and we require that $I\sigma \models S\sigma$ if and only if $I \models S$.
$I$ is a model of a rule $r$ of the form (\ref{eq:rule}), denoted $I \models r$, if $H(r) \in I$ whenever $I \models B(r)$.
$I$ is a model of a program $P$, denoted $I \models P$, if $I \models r$ for every rule $r \in P$.

\begin{example}
Consider program $P_{\ref{program:1}}$ from Example~\ref{ex:program}.
It can be observed that
$\emptyset \not\models P_{\ref{program:1}}$ and $\{a,b\} \not\models P_{\ref{program:1}}$ (both rules have true bodies but false heads),
while $\{a\} \models P_{\ref{program:1}}$ and
$\{b\} \models P_{\ref{program:1}}$.
\end{example}


Structures can be characterized in terms of \emph{monotonicity} as follows.
\begin{definition}[Monotone Structures]
A structure $S$ is monotonic if for all pairs $X,Y$ of interpretations such that $X \subset Y$, $X \models S$ implies $Y \models S$.
\end{definition}
\begin{definition}[Antimonotone Structures]
A structure $S$ is antimonotonic if for all pairs $Y,Z$ of interpretations such that $Y \subset Z$, $Z \models S$ implies $Y \models S$.
\end{definition}
\begin{definition}[Convex Structures]
A structure $S$ is convex if for all triples $X,Y,Z$ of interpretations such that $X \subset Y \subset Z$, $X \models S$ and $Z \models S$ implies $Y \models S$.
\end{definition}
Note that monotonic and antimonotonic structures are convex.
Moreover, note that convex structures are closed under conjunction (but not under disjunction or negation).

\begin{example}
Structure $S^{m,n}$ from Example~\ref{ex:classes} is convex in general;
it is monotonic if $m = n$, and antimonotonic if $m = 0$.
Structure $S_{\ref{structure:3}}$ from Example~\ref{ex:structures}, instead, is non-convex if $n > k > 0$;
it is monotonic if $k = 0$, and antimonotonic if $n = k$.
\end{example}

We first describe a reduct-based semantics, usually referred to as FLP, which has been described and analyzed in \cite{fabe-etal-2004-jelia,fabe-etal-2011-aij}.

\begin{definition}[FLP Reduct]
The FLP reduct $P^I$ of a program $P$ with respect to $I$ is defined as the set $\{r \in P \mid I \models B(r)\}$.
\end{definition}

\begin{definition}[FLP Answer Sets]
$I$ is an FLP answer set of $P$ if $I \models P^I$ and for each $J \subset I$ it holds that $J \not\models P^I$.
\end{definition}

\begin{example}
Consider program $P_{\ref{program:1}}$ from Example~\ref{ex:program} and the interpretation $\{a\}$.
The reduct $P_{\ref{program:1}}^{\{a\}}$ is $\{a \leftarrow S_{\ref{structure:4}}\}$.
Since $\{a\}$ is a minimal model of the reduct, $\{a\}$ is an FLP answer set of $P_{\ref{program:1}}$.
Similarly, it can be observed that $\{b\}$ is another FLP answer set.
Actually, these are the only FLP answer sets of the program.
\end{example}

We will next describe a different semantics, using the definition of \cite{son-pont-2007-tplp}, called ``fixpoint answer set'' in that paper. Theorem 3 in \cite{son-pont-2007-tplp} shows that it is actually equivalent to the two-valued fix-point of ultimate approximations of generalized atoms in \cite{pelo-2004}\footnote{There is an even closer relationship, as the operator $K^P_M(I)$ of \cite{son-pont-2007-tplp} coincides with $\phi^{aggr,1}_P(I,M)$ defined in \cite{pelo-2004}, as shown in the appendix of \cite{son-pont-2007-tplp}}, and therefore with stable models for ultimate approximations of aggregates as defined in \cite{pelo-etal-2007-tplp}. We will refer to it as PSP to abbreviate Pelov/Son/Pontelli, the names most frequently associated with this semantics.

\begin{definition}[Conditional Satisfaction]
A structure $S$ on $\BP$ is \emph{conditionally satisfied} by a pair of interpretations $(I,M)$, denoted $(I,M) \models S$, if $J \models S$ for each $J$ such that $I \subseteq J \subseteq M$.
\end{definition}

\begin{definition}[PSP Answer Sets]
An interpretation $M$ is a PSP answer set if $M$ is the least fixpoint of the following operator:
\begin{equation}
K^P_M(I) = \{H(r) \mid r \in P \wedge (I,M) \models B(r)\}.
\end{equation}
\end{definition}

\begin{example}
Consider program $P_{\ref{program:1}}$ from Example~\ref{ex:program} and the interpretation $\{a\}$.
The least fixpoint of $K^{P_{\ref{program:1}}}_{\{a\}}$ is $\{a\}$.
In fact, $\emptyset \models S_{\ref{structure:4}}$ and $\{a\} \models S_{\ref{structure:4}}$, hence $(\emptyset, \{a\}) \models S_{\ref{structure:4}}$, while $\{a\} \not\models S_{\ref{structure:5}}$ and thus $(\emptyset, \{a\}) \not\models S_{\ref{structure:5}}$ and $(\{a\}, \{a\}) \not\models S_{\ref{structure:5}}$.
Therefore, $\{a\}$ is a PSP answer set.
Also $\{b\}$ is a PSP answer set.
\end{example}

On programs considered in this paper, PSP answer sets also coincide with ``answer sets'' defined in \cite{son-etal-2007-jair} (by virtue of Proposition 10 in \cite{son-etal-2007-jair}) and ``well-justified FLP answer sets'' of \cite{shen-wang-2012-aaai} (by virtue of Theorem 5 in \cite{shen-wang-2012-aaai}). The latter is particularly interesting, as it is defined by first forming the FLP reduct.
Indeed, as shown in \cite{shen-wang-2012-aaai}, the operator $K^P_M$ can be equivalently defined as follows:
\begin{equation}
K^P_M(I) = \{H(r) \mid r \in P^M \wedge (I,M) \models B(r)\}.
\end{equation}

There are several other semantic definitions on programs that have some restrictions on the admissible structures, which also coincide with the PSP semantics on programs as defined in this paper with the respective structure restriction. Examples are \cite{mare-etal-2004-lpnmr} for monotonic structures (that are also allowed to occur in rule heads in that paper), or \cite{simo-etal-2002} that allows for structures corresponding to cardinality and weight constraints and largely coincide with the PSP semantics (see \cite{pelo-2004} for a discussion on structures on which the semantics coincides).

In this paper we are mainly interested in cautious reasoning, defined next.

\begin{definition}[Cautious Reasoning]
A propositional atom $a$ is a cautious consequence of a program $P$ under FLP (resp.\ PSP) semantics, denoted $P \models^{\mathit{FLP}}_c a$ (resp.\ $P \models^{\mathit{PSP}}_c a$), if $a$ belongs to all FLP (resp.\ PSP) answer set of $P$.
\end{definition}

\begin{example}
Consider program $P_{\ref{program:1}}$ from Example~\ref{ex:program}.
We have $P_{\ref{program:1}} \not\models^{\mathit{FLP}}_c a$ and $P_{\ref{program:1}} \not\models^{\mathit{FLP}}_c b$, and similar for PSP semantics.
If we add $a \leftarrow b$ and $b \leftarrow a$ to the program, then there is only one FLP answer set, namely $\{a,b\}$, and no PSP answer sets.
In this case $a$ and $b$ are cautious consequences of the program (under both semantics).
\end{example}

\section{Exploring the Relationship between the FLP and PSP Semantics}

In this section, we examine in detail how the FLP and PSP semantics relate. We shall proceed in three steps. First, we formally prove that FLP and PSP semantics coincide on programs with convex structures in Section~\ref{sec:coincidence}. Next, we turn towards complexity as a tool to understand whether there can be any larger class of coinciding programs. We start in Section~\ref{sec:consonance} with a result that shows that programs without restrictions exhibit the same complexity under both FLP and PSP semantics. However, it is known that the semantics do not coincide for programs without restrictions, and we examine the complexity proofs to highlight the different complexity sources. These findings are then applied in Section~\ref{sec:dissonance} in order to identify programs with bounded non-convex structures, on which the complexities for FLP and PSP semantics differ. Under usual complexity assumptions, this also implies that programs with convex aggregates is the largest class of programs on which FLP and PSP coincide.

\subsection{Unison: Convex Structures} \label{sec:coincidence}

In this section we show that for programs with convex aggregates the FLP and PSP semantics coincide.  In \cite{liu-etal-2010-aij} it is stated that many semantics (and in particular, FLP and PSP) ``agree on [...] programs with convex aggregates'' because ``they can be regarded as special programs with monotone constraints.'' However, the comment on regarding convex aggregates as monotone constraints relies on a transformation described in \cite{liu-trus-2006-jair} that transforms convex structures into conjunctions of positive and negated monotone constraints. Since our language does not explicitly allow negation, and in particular since convex structures are not closed under negation, we next prove in a more direct manner that the FLP and PSP semantics coincide on convex structures.

One direction of the proof relies on the well-known more general fact that each PSP answer set is also an FLP answer set. This has been stated as Theorem~2 in \cite{son-pont-2007-tplp} and Proposition~8.1 in \cite{pelo-etal-2007-tplp}.

\begin{theorem}
Let $P$ be program whose body structures are convex, and let $M$ be an interpretation.
$M$ is an FLP answer set of $P$ if and only if $M$ is an PSP answer set of $P$.
\end{theorem}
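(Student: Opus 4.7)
The plan is to handle the two directions of the biconditional asymmetrically. The direction PSP $\Rightarrow$ FLP is stated above the theorem as a well-known general fact (holding for arbitrary structures), so the real work is in showing FLP $\Rightarrow$ PSP when all body structures are convex.

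The central observation that drives the convex direction is a simplification of conditional satisfaction: if $S$ is convex and $I \subseteq M$, then $(I,M) \models S$ iff both $I \models S$ and $M \models S$. The $\Rightarrow$ part is immediate from taking $J = I$ and $J = M$, and the $\Leftarrow$ part is exactly the definition of convexity applied to any $J$ with $I \subseteq J \subseteq M$. This collapses the PSP operator to $K^P_M(I) = \{H(r) \mid r \in P^M,\ I \models B(r)\}$ for $I \subseteq M$, i.e.\ to the immediate-consequence operator on the FLP reduct where rule bodies are evaluated on $I$.

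Given this, I would proceed in three small steps. First, argue by induction on the iterates $I_0 = \emptyset$, $I_{n+1} = K^P_M(I_n)$ that each $I_n \subseteq M$: whenever $(I_n, M) \models B(r)$, we in particular have $M \models B(r)$, so $r \in P^M$, and since $M \models P^M$ (part of the FLP hypothesis), $H(r) \in M$. Second, let $I^\ast = \bigcup_n I_n$ be the least fixpoint of $K^P_M$ (monotonicity of $K^P_M$ in $I$ holds generally, since a larger $I$ only shrinks the range $[I,M]$ on which $B(r)$ must hold). Using the convex-structure equivalence, verify that $I^\ast$ is itself a model of $P^M$: for any $r \in P^M$, either $I^\ast \not\models B(r)$ and the rule is vacuously satisfied, or $I^\ast \models B(r)$ in which case convexity gives $(I^\ast, M) \models B(r)$ and thus $H(r) \in K^P_M(I^\ast) = I^\ast$. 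Third, invoke the minimality clause of the FLP definition: $I^\ast \subseteq M$ is a model of $P^M$, so $I^\ast = M$, which exhibits $M$ as the least fixpoint of $K^P_M$, i.e.\ as a PSP answer set.

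The main obstacle I expect is not any single step but making sure that the convexity equivalence between $(I,M) \models S$ and ``$I \models S$ and $M \models S$'' is invoked in the correct direction at the right moments: once to justify that reduct membership and body-satisfaction on $I$ suffice to fire a rule in $K^P_M$, and once, crucially, to show that $I^\ast$ being a model of $P^M$ forces $H(r) \in I^\ast$ whenever $I^\ast$ already satisfies the body. Everything else is a clean induction and a minimality appeal, and no assumption beyond convexity of body structures is used, so the argument does not rely on the transformation from \cite{liu-trus-2006-jair} discussed before the theorem statement.
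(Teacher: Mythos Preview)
Your proposal is correct and follows essentially the same route as the paper: cite the general PSP $\Rightarrow$ FLP direction, iterate $K^P_M$ from $\emptyset$ to a least fixpoint $K$, show $K \subseteq M$ by induction, show $K \models P^M$ using convexity to pass from $K \models B(r)$ and $M \models B(r)$ to $(K,M) \models B(r)$, and conclude $K = M$ by minimality of $M$ for $P^M$. The only difference is presentational: you isolate the equivalence ``$(I,M)\models S$ iff $I\models S$ and $M\models S$'' as a preliminary observation, whereas the paper applies convexity tacitly at the single step where it is needed.
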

\begin{proof}
The left implication follows from Theorem~2 in \cite{son-pont-2007-tplp}.
For the right implication, let $M$ be an FLP answer set of $P$.
Let $K_0 := \emptyset$, $K_{i+1} := K^P_M(K_i)$ for $i \geq 0$, and let $K$ be the fixpoint of this sequence.
Since $M$ is a minimal model of $P^M$ by definition of FLP answer set, we can prove the claim by showing (i) $K \models P^M$ and (ii) $K \subseteq M$.

(i) Consider a rule $r \in P^M$ such that $K \models B(r)$.
We have to show $H(r) \in K$.
Since $r \in P^M$, $M \models B(r)$ holds.
Thus, $(K,M) \models B(r)$ and therefore $H(r) \in K$.

(ii) We prove $K_i \subseteq M$ for each $i \geq 0$.
We use induction on $i$.
The base case is trivially true as $K_0 = \emptyset \subseteq M$.
Suppose $K_i \subseteq M$ for some $i \geq 0$ in order to prove $K_{i+1} \subseteq M$.
By definition of $K^P_M$, for each $a \in K_{i+1}$ there is $r \in P^M$ such that $H(r) = a$ and $(K_i,M) \models B(r)$.
Thus, $M \models B(r)$, which implies $a \in M$.
\qed
\end{proof}

Therefore, programs with convex structures form a class of programs for which the FLP and PSP semantics coincide. In the following, we will show that it is likely also the largest class for which this holds.

\subsection{Consonance: Complexity of Unrestricted Structures} \label{sec:consonance}

In this section we will examine the computational impact of allowing non-convex structures. We will limit ourselves to structures for which the truth value with respect to an interpretation can be determined in polynomial time. Moreover, we will focus on cautious reasoning, but similar considerations apply also to related problems such as brave reasoning, answer set existence, or answer set checking.

It is known that cautious reasoning over programs with
arbitrary structures under the FLP semantics is $\Pi^P_2$-complete in
general, as shown in \cite{fabe-etal-2011-aij}. Pelov has shown $\Sigma^P_2$-completeness for deciding the existence of PSP answer sets in \cite{pelo-2004}, from which $\Pi^P_2$-completeness for cautious reasoning under the PSP semantics can be derived. We formally state this result now and provide a different proof than Pelov's that will more directly lead to the subsequent considerations.

\begin{theorem}\label{thm:PSP-completeness}
Cautious reasoning under PSP semantics is $\Pi^P_2$-complete.
\end{theorem}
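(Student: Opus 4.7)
The theorem splits into membership in $\Pi^P_2$ and $\Pi^P_2$-hardness. For membership, I would show that the complement (``does there exist a PSP answer set $M$ with $a\notin M$?'') lies in $\Sigma^P_2$. The $\Sigma^P_2$ machine guesses: a candidate $M$ with $a\notin M$; the ascending chain $K_0=\emptyset\subseteq K_1\subseteq\cdots\subseteq K_n=M$ produced by iterating $K^P_M$; for each head $h\in K_i\setminus K_{i-1}$ a witnessing rule $r_h$ with $H(r_h)=h$; and for each rule $r$ whose head is absent from $K_i$ a witness $J_{r,i}\in[K_{i-1},M]$ with $J_{r,i}\not\models B(r)$. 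The universal stage then checks, for every guessed $r_h$, that each $J\in[K_{i-1},M]$ satisfies $B(r_h)$, and verifies each witness $J_{r,i}$ directly. Monotonicity of $K^P_M$ in its first argument and finiteness of the atom set bound the chain by a polynomial in $|P|$.

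\emph{Hardness.} I would reduce from the $\Pi^P_2$-complete problem $\forall X\,\exists Y\,\psi(X,Y)$, equivalently the complement of $\exists X\,\forall Y\,\phi(X,Y)$ with $\phi=\neg\psi$. The key idea is that conditional satisfaction $(I,M)\models S$ is itself a universal quantification over $J\in[I,M]$, so a single rule with an appropriately chosen non-convex body can encode $\forall Y\,\phi(X_M,Y)$. The program uses atoms $x_i,\bar x_i$ per $X$-variable with antimonotone ``choice'' rules that place a complete $X$-decision in $K_1$; atoms $y_j$ per $Y$-variable; a distinguished atom $a$; a constraint gadget built from auxiliary atoms $b,u$; and a fresh atom $p$ appearing in no rule. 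The core rule is $a\leftarrow S$, where $S$ has domain $\{x_i,\bar x_i\}_i\cup\{y_j\}_j$ and is true on $J$ iff $J$ carries a full $X$-decision and $\phi(J\cap X,J\cap Y)$ holds. Additional rules $y_j\leftarrow a$ propagate the $Y$-atoms once $a$ has been derived, and the gadget---$b$ derived via the complementary structure of $a$, and $u$ derived via the non-convex structure on $\{b,u\}$ true iff $b\in J$ and $u\notin J$---forces $a\in M$ in every PSP answer set.

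Tracing the iteration for a candidate $M$: $K_1$ equals $M\cap\{x_i,\bar x_i\}_i$ and contains no $y_j$; at the next step $(K_1,M)\models S$ holds iff $M$ contains every $y_j$ (so $J\cap Y$ ranges over all subsets of $Y$) and $\phi(X_M,Y)$ holds for every such $Y$, i.e.\ iff $\forall Y\,\phi(X_M,Y)$. The gadget rules out any $M$ with $a\notin M$, because then $u$ would need to be in $M$ yet could not be derivable. Hence the PSP answer sets of the program are in bijection with witnesses of $\exists X\,\forall Y\,\phi$, so $p$ is a cautious consequence iff no such witness exists, iff $\forall X\,\exists Y\,\psi$.

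\emph{Main obstacle.} The delicate point is making conditional satisfaction truly enumerate $Y$-assignments: we need $\{y_j\}_j\subseteq M$ (so the upper endpoint of $[K_1,M]$ exposes every $Y$-assignment) while $K_1\cap\{y_j\}_j=\emptyset$ (so the lower endpoint fixes nothing). The propagation $y_j\leftarrow a$ together with the constraint forcing $a\in M$ is precisely what aligns the two endpoints simultaneously; omitting the constraint leaves spurious answer sets with $M\cap Y=\emptyset$ that only test the $Y=\emptyset$ assignment and destroy the reduction.
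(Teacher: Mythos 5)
Your proof is correct, and its core mechanism is the same as the paper's: conditional satisfaction $(I,M)\models S$ is itself a universal quantification over the interval $[I,M]$, so one \emph{instance-specific} non-convex structure can absorb the inner universal quantifier of a QBF --- precisely the feature the paper singles out when contrasting the PSP and FLP hardness proofs. The packaging, however, is genuinely different. For membership the paper just cites Corollary~1 of Son and Pontelli, while you give a self-contained $\Sigma^P_2$ guess-and-check for the complement; both work, and your chain-length bound via monotonicity of $K^P_M$ in its first argument is the right observation. For hardness the paper encodes the $Y$-variables as pairs $y_j^T,y_j^F$, uses the saturation rules $y_j^T\leftarrow\mathit{sat}$, $y_j^F\leftarrow\mathit{sat}$ and queries $w$ via $w\leftarrow not\ \mathit{sat}$, so that $w$ fails to be cautious exactly when a saturated answer set exists; you instead use single atoms $y_j$, propagate them with $y_j\leftarrow a$, force $a$ by a constraint gadget, and query a fresh atom occurring in no rule, so that cautious consequence reduces to non-existence of answer sets. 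Your single-atom encoding of $Y$ buys a real simplification: every $J\cap Y$ with $J\in[K_1,M]$ is a legitimate $Y$-assignment, whereas under the $y^T/y^F$ encoding the interval also contains incomplete and inconsistent pseudo-assignments, on which the body structure must additionally be declared true --- a point the paper's $\mu(E')$ glosses over (for instance $J=K_1$ falsifies every disjunct of $\mu(E')$ that mentions a $y$-literal, so the structure cannot literally be the DNF). Two small inaccuracies on your side, neither fatal: the gadget structure ``$b\in J$ and $u\notin J$'' is the convex conjunction $b\wedge not\ u$, not a non-convex structure (which is fine, as the gadget needs no non-convexity); and the claim that $(K_1,M)\models S$ ``holds iff $M$ contains every $y_j$ and $\phi(X_M,Y')$ holds for all $Y'$'' is literally only an iff for answer-set candidates, since for arbitrary $M$ the quantification ranges over $Y'\subseteq M\cap Y$ --- your gadget-plus-propagation argument does close this loop correctly.
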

\begin{proof}
Membership follows by Corollary~1 of \cite{son-pont-2007-tplp}.
For the hardness, we provide a reduction from 2-QBF$_\forall$.
Let $\Psi = \forall x_1 \cdots \forall x_m \exists y_1 \cdots \exists y_n\ E$,
where $E$ is in 3CNF.
Formula $\Psi$ is equivalent to $\neg \Psi'$, where $\Psi' = \exists x_1 \cdots \exists x_m \forall y_1 \cdots \forall y_n\ E'$, and $E'$ is a 3DNF equivalent to $\neg E$ and obtained by applying De Morgan's laws.
To prove the claim we construct a program $P_{\Psi}$ such that $P_{\Psi} \models^\mathit{PSP}_c w$ ($w$ a fresh atom) if and only if $\Psi$ is valid, i.e., iff $\Psi'$ is invalid.

Let $E' = (l_{1,1} \wedge l_{1,2} \wedge l_{1,3}) \vee \cdots \vee (l_{k,1} \wedge l_{k,2} \wedge l_{k,3})$, for some $k \geq 1$.
Program $P_{\Psi}$ is the following:
$$
\begin{array}{lllc}
x_{i}^T \leftarrow not\ x_{i}^F \quad & x_{i}^F \leftarrow not\ x_{i}^T \quad & i \in \{1,\ldots,m\} \qquad & \equationlabel{eq:guessX}\\
y_{i}^T \leftarrow not\ y_{i}^F & y_{i}^F \leftarrow not\ y_{i}^T & i \in \{1,\ldots,n\} & \equationlabel{eq:guessY} \\
y_{i}^T \leftarrow \mathit{sat} & y_{i}^F \leftarrow \mathit{sat} & i \in \{1,\ldots,n\} & \equationlabel{eq:saturate} \\
\mathit{sat} \leftarrow \mu(E') &&& \equationlabel{eq:sat} \\
w \leftarrow not\ \mathit{sat} && & \equationlabel{eq:unsat}
\end{array}
$$
where $\mu$ is defined recursively as follows:
\begin{itemize}
\item
$\mu(E') := (\mu(l_{1,1}) \wedge \mu(l_{1,2}) \wedge \mu(l_{1,3})) \vee \cdots \vee (\mu(l_{k,1}) \wedge \mu(l_{k,2}) \wedge \mu(l_{k,3}))$;
\item
$\mu(x_i) := x_{i}^T$ and $\mu(\neg~x_i) := x_{i}^F$ for all $i = 1, \ldots, m$;
\item
$\mu(y_i) := y_{i}^T$ and $\mu(\neg~y_i) := y_{i}^F$ for all $i = 1, \ldots, n$.
\end{itemize}
Note that structure $\mu(E')$ can also be encoded by means of a \emph{sum} aggregate as shown in \cite{alvi-etal-2011-jair-aggregates}.

Rules (\ref{eq:guessX})--(\ref{eq:guessY}) force each PSP answer set of $P_{\Psi}$ to contain at least one of $x_{i}^T$, $x_{i}^F$ ($i \in \{1,\ldots,m\}$), and one of $y_{j}^T$, $y_{j}^F$ ($j \in \{1,\ldots,m\}$), respectively, encoding an assignment of the propositional variables in $\Psi'$.
Rules (\ref{eq:saturate}) are used to simulate universality of the $y$ variables, as described later.
Having an assignment, rule (\ref{eq:sat}) derives $\mathit{sat}$ if the assignment satisfies some disjunct of $E'$ (and hence also $E'$ itself).
Finally, rule (\ref{eq:unsat}) derives $w$ if $\mathit{sat}$ is false.

We first show that $\Psi$ not valid implies $P_{\Psi} \not\models^\mathit{PSP}_c w$.
If $\Psi$ is not valid, $\Psi'$ is valid.
Hence, there is an assignment $\nu$ for $x_1, \ldots, x_m$ such that no extension to $y_1, \ldots, y_n$ satisfies $E$, i.e., all these extensions satisfy $E'$.
Let us consider the following interpretation (which is also a model of $P_\Psi$):
$$
\begin{array}{lll}
M &=& \{x_{i}^T \mid \nu(x_i) = 1,\ i = 1, \ldots, m\} \cup  \{x_{i}^F \mid \nu(x_i) = 0,\ i = 1, \ldots, m\} \\
 &\cup& \{y_{i}^T, y_{i}^F \mid i = 1, \ldots, n\} \cup \{\mathit{sat}\}
\end{array}
$$
We claim that $M$ is a PSP answer set of $P_\Psi$.
In fact, $K^{P_\Psi}_M(\emptyset) \supseteq \{x_{i}^T \mid \nu(x_i) = 1,\ i = 1, \ldots, m\} \cup  \{x_{i}^F \mid \nu(x_i) = 0,\ i = 1, \ldots, m\}$ because of rules (\ref{eq:guessX}) in $P_\Psi^M$.
Since any assignment for the $y$s satisfies at least a disjunct of $E'$, from rule (\ref{eq:sat}) we derive $\mathit{sat} \in K^{P_\Psi}_M(K^{P_\Psi}_M(\emptyset))$.
Hence, rules (\ref{eq:saturate}) force all $y$ atoms to belong to $K^{P_\Psi}_M(K^{P_\Psi}_M(K^{P_\Psi}_M(\emptyset)))$, which is thus the least fixpoint of $K^{P_\Psi}_M$ and coincides with $M$.

Now we show that $P_{\Psi} \not\models^\mathit{PSP}_c w$ implies that $\Psi$ is not valid.
To this end, let $M$ be a PSP answer set of $P_{\Psi}$ such that $w \notin M$.
Hence, by rule (\ref{eq:unsat}) we have that $M \models \mathit{sat}$.
From $\mathit{sat} \in M$ and rules (\ref{eq:saturate}), we have $y_{i}^T,y_{i}^F \in M$ for all $i = 1, \ldots, n$.
And $M$ contains either $x_{i}^T$ or $x_{i}^F$ for $i = 1, \ldots, m$ because of rules (\ref{eq:guessX}).
Suppose by contradiction that $\Psi$ is valid.
Thus, for all assignments of $x_1, \ldots, x_m$, there is an assignment for $y_1, \ldots, y_n$ such that $E$ is true, i.e., $E'$ is false.
We can show that the least fixpoint of $K^{P_\Psi}_M$ is
$K^{P_\Psi}_M(\emptyset) = \{x_{i}^T \mid \nu(x_i) = 1,\ i = 1, \ldots, m\} \cup  \{x_{i}^F \mid \nu(x_i) = 0,\ i = 1, \ldots, m\}$.
In fact, $\mathit{sat}$ cannot be derived because $K^{P_\Psi}_M(\emptyset) \not\models \mu(E')$. We thus have a contradiction with the assumption that $M$ is a PSP answer set of $P_\Psi$.
\qed
\end{proof}

It is also known that the complexity drops to \coNP if
structures in body rules are constrained to be convex. This appears to
be ``folklore'' knowledge and can be argued to follow from results in
\cite{liu-trus-2006-jair}. An easy way to see membership in \coNP is
that all convex structures can be decomposed into a conjunction of a
monotonic and an antimonotonic structure, for which membership in \coNP
has been shown in \cite{fabe-etal-2011-aij}.

It is instructive to note a crucial difference between the $\Pi^P_2$-hardness proofs in \cite{fabe-etal-2011-aij} (and a similar one in \cite{ferr-2005-lpnmr}) and the proofs for Theorem~\ref{thm:PSP-completeness} and the $\Sigma^P_2$ result for PSP in \cite{pelo-2004}.

The fundamental tool in the FLP hardness proofs is the availability of structures $S_1,S_2$ that
allow for encoding ``need to have either atom $x^T$ or $x^F$, or both
of them, but the latter only upon forcing the truth of both atoms.''
$S_1, S_2$ have domains $D_{S_1} = D_{S_2} = \{x^T,x^F\}$ and the following satisfaction patterns:
\[
\begin{array}{llll}
\emptyset \models S_1 \qquad & \{x^T\} \models S_1 \qquad & \{x^F\} \not\models S_1 \qquad & \{x^T,x^F\} \models S_1 \\
\emptyset \models S_2 & \{x^T\} \not\models S_2 & \{x^F\} \models S_2 & \{x^T,x^F\} \models S_2 \\
\end{array}
\]
The reductions then use these structures in a similar way than disjunction is used in the classic $\Sigma^P_2$-hardness proofs in \cite{eite-gott-95}. In particular, the same structures are used for all instances to be reduced.

On the other hand, in the PSP hardness proofs, one dedicated structure is used for each instance of the problem reduced from (2QBF in Theorem~\ref{thm:PSP-completeness}). Indeed, a construction using structures $S_1,S_2$ as described earlier is not feasible for PSP, because $(\emptyset,\{x^T,x^F\}) \not \models S_1$ and $(\emptyset,\{x^T,x^F\}) \not \models S_2$. This is because there is one satisfaction ``hole'' between $\emptyset$ and $\{x^T,x^F\}$ for both $S_1$ and $S_2$.
In the next section, we will exploit this difference.

\subsection{Dissonance: Complexity of Non-convex Structures with Bounded Domains} \label{sec:dissonance}

In this section, we look more carefully at programs with non-convex structures and identify computational differences between the FLP and PSP semantics.
In \cite{alvi-fabe-2013-lpnmr} it has been shown that any non-convex structure (plus all of its variants) can be used in order to implement $S_1$ and $S_2$.
 This result makes it clear that the presence of any non-convex structure that is closed under variants causes a complexity increase for the FLP semantics (unless the polynomial hierarchy collapses). From the above considerations, it is immediately clear that the same construction is not feasible for PSP. It turns out that also no alternative way exists to obtain a similar result, and that the difference in the $\Pi^P_2$-hardness proofs for FLP and PSP is intrinsic.

We start by considering a simple non-convex structure $\Angstrom$ with $D_{\Angstrom}=\{x,y\}$ and $I \models \Angstrom$ if and only if $|I\cap D_{\Angstrom}| \neq 1$. Therefore, $\Angstrom$ behaves like a cardinality constraint $\mathit{COUNT}(\{x,y\})\neq 1$.

\begin{proposition}\label{prop:omega-check}
Deciding whether an interpretation $M$ is a PSP answer set of a datalog$^{\{\Angstrom\}}$ program $P$ is feasible in polynomial time, in particular $DTIME(m^2)$, where $m$ is the number of rules in $P$.
\end{proposition}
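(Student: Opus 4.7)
The plan is to characterize conditional satisfaction for the two kinds of rule bodies allowed in a \textit{datalog}$^{\{\Angstrom\}}$ program, use this characterization to reduce the computation of the least fixpoint of $K^{P}_M$ to the evaluation of a standard positive datalog program, and then conclude with the well-known polynomial bound for the latter.

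First I would observe that for a body $S$ that is a conjunction of atoms we have $(I,M) \models S$ iff $I \models S$: monotonicity of $S$ gives one direction, and the other is immediate since $I$ itself lies in the interval $[I,M]$. Second, for a body of the form $\Angstrom\sigma$ with $D_{\Angstrom\sigma} = \{x',y'\}$, a short case analysis on $|M \cap \{x',y'\}|$ yields the characterization:
\begin{itemize}
\item if $\{x',y'\} \cap M = \emptyset$, then $(I,M) \models \Angstrom\sigma$ unconditionally;
\item if $\{x',y'\} \subseteq M$, then $(I,M) \models \Angstrom\sigma$ iff $\{x',y'\} \subseteq I$;
\item if $|\{x',y'\} \cap M| = 1$, then $(I,M) \not\models \Angstrom\sigma$ for every $I \subseteq M$.
\end{itemize}
Intuitively, the pair $\{x',y'\}$ must keep the same cardinality (either 0 or 2) throughout $[I,M]$, as otherwise some $J$ in the interval would intersect $\{x',y'\}$ in exactly one element.

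Equipped with this characterization I would construct, in one linear pass over $P$, an equivalent positive datalog program $P'$: conjunctive rules are kept unchanged, and a rule $H(r) \leftarrow \Angstrom\sigma$ is replaced with the fact $H(r)$, with the conjunctive rule $H(r) \leftarrow x',y'$, or is deleted, according to the three cases above. By construction $K^{P}_M(I) = T_{P'}(I)$ for every $I \subseteq M$, where $T_{P'}$ is the classical immediate-consequence operator of $P'$. Hence iterating from $\emptyset$ the two operators produce the same sets as long as these sets remain contained in $M$, and it follows that $M$ is a PSP answer set of $P$ iff the least fixpoint of $T_{P'}$ equals $M$. The latter is computable by standard bottom-up evaluation within $O(m^2)$ time, since at most $m$ rounds of the operator suffice and each round scans the $O(m)$ rules of $P'$; a final linear comparison with $M$ completes the check. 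The main subtlety, and the step I expect to deserve most care, is the three-case analysis establishing the characterization of $(I,M) \models \Angstrom\sigma$; once it is settled, the reduction and the complexity bound are essentially mechanical.
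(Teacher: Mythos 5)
Your proof is correct and follows essentially the same route as the paper's: the three-case analysis of $(I,M)\models\Angstrom\sigma$ in terms of $|I\cap D_{\Angstrom\sigma}|$ and $|M\cap D_{\Angstrom\sigma}|$ is exactly the paper's constant-time test, and the $O(m^2)$ bound comes from the same ``at most $m$ rounds, each scanning $m$ rules'' fixpoint argument. Your repackaging of the computation as bottom-up evaluation of a positive datalog program $P'$ (rather than iterating $K^P_M$ directly) is a cosmetic variation, though a tidy one.
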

\begin{proof}
For any interpretation, testing whether $(I,M) \models \Angstrom\sigma$ (for a variant $\Angstrom\sigma$ of $\Angstrom$) can be done by examining  $|I\cap D_{\Angstrom\sigma}|= i$ and $|M\cap D_{\Angstrom\sigma}| = j$ and returning false if either one of $i$, $j$ is 1, or if $i=0$ and $j=2$. Alternatively, in a less syntax dependent way, one can test whether $M \models \Angstrom\sigma$ and $(I\cup J) \models \Angstrom\sigma$ for each $J \subseteq (M\cap D_{\Angstrom\sigma})\setminus (I\cap D_{\Angstrom\sigma})$. Since there are at most 4 different $J$ for each $I$, either method is feasible in constant time.

For determining whether $M$ is a PSP answer set of $P$, we can check whether it is the least fixpoint of $K^P_M$. Computing the least fixpoint takes at most $m$ applications of $K^P_M$ (where $m$ is the number of rules in $P$). Each application of $K^P_M$ involves in turn at most $m$ tests for $(I,M) \models \Angstrom\sigma$.
\qed
\end{proof}

Given Proposition~\ref{prop:omega-check} it follows that cautious reasoning is still in $\coNP$ for datalog$^{\{\Angstrom\}}$ programs under the PSP semantics.

\begin{proposition}\label{prop:omega-cautious}
Given a datalog$^{\{\Angstrom\}}$ program $P$ and an atom $a$, deciding $P \models^{\mathit{PSP}}_c a$ is in $\coNP$.
\end{proposition}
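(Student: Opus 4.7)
The plan is to establish membership in $\coNP$ by showing that the complementary problem — deciding whether there exists a PSP answer set $M$ of $P$ with $a \notin M$ — is in $\mathit{NP}$, and then appeal to closure of $\coNP$ under complement.

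First, I would observe that any candidate PSP answer set $M$ is a subset of the atoms occurring in $P$, and is therefore of size polynomial in the input. Hence $M$ can be guessed by a nondeterministic Turing machine in polynomial time. Second, given the guessed $M$, I would invoke Proposition~\ref{prop:omega-check} to verify in deterministic polynomial time (in fact in $DTIME(m^2)$, where $m$ is the number of rules) that $M$ is indeed a PSP answer set of $P$. Third, the test $a \notin M$ is trivially performable in polynomial time. Combining these three steps, the complement problem is decidable in nondeterministic polynomial time, so the original problem lies in $\coNP$.

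There is essentially no technical obstacle here, since Proposition~\ref{prop:omega-check} already provides the polynomial-time answer-set check that is the only non-trivial ingredient; the rest is a standard guess-and-check argument. The only point worth stating carefully is that the class datalog$^{\{\Angstrom\}}$ is closed in the sense required: $P$ only uses conjunctions of atoms and variants $\Angstrom\sigma$ of $\Angstrom$, so Proposition~\ref{prop:omega-check} applies uniformly to every rule body encountered during the evaluation of $K^P_M$.
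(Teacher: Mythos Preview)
Your proposal is correct and mirrors the paper's own proof almost exactly: both argue that the complement is in $\mathit{NP}$ by guessing an interpretation $M$ with $a \notin M$ and verifying in polynomial time via Proposition~\ref{prop:omega-check} that $M$ is a PSP answer set. The additional remarks you make about the polynomial size of $M$ and the applicability of Proposition~\ref{prop:omega-check} to all bodies in datalog$^{\{\Angstrom\}}$ are sound elaborations, but the core argument is the same standard guess-and-check.
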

\begin{proof}
The complement has an immediate nondeterministic polynomial time algorithm: guess an interpretation $M$ and verify in polynomial time that $a \not\in M$ and that $M$ is a PSP answer set of $P$ (by virtue of Proposition~\ref{prop:omega-check}).
\qed
\end{proof}

It follows that for datalog$^{\{\Angstrom\}}$ cautious reasoning (and also answer set existence and brave reasoning) is more complex for the FLP semantics than for the PSP semantics (unless the polynomial hierarchy collapses to its first level).

Examining this result and its proof carefully, we can see that it depends on the fact that each $D_{\Angstrom\sigma}$ contains 2 elements and therefore at most 4 satisfaction tests are needed to determine $(I,M) \models \Angstrom\sigma$. Indeed, we can apply similar reasoning whenever the domains of involved structures are smaller than a given bound.

\begin{theorem}\label{thm:fixedParameterTractability}
Let $P$ be a program. 
If $k$ is an upper bound for the domain size of any structure occurring in $P$,
then checking whether a given interpretation $M$ is a PSP answer set of $P$ is decidable in $DTIME\nop{^C}(2^k m^2 p(n))$, where $m$ is the number of rules in $P$ and $p(n)$ is the polynomial function (in terms of the input size $n$) bounding determining satisfaction of any aggregate in $P$.
\end{theorem}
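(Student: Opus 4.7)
The plan is to generalize the argument used in Proposition~\ref{prop:omega-check} by tracking how the domain-size bound $k$ enters the complexity of the conditional satisfaction test. First, I would observe that checking whether $M$ is a PSP answer set reduces to computing the least fixpoint of $K^P_M$ and comparing it with $M$. Since the sequence $K_0 = \emptyset, K_{i+1} = K^P_M(K_i)$ is monotonically increasing and bounded above by $M$, and since $K^P_M$ only ever produces atoms occurring as heads in $P$, the fixpoint is reached after at most $m$ iterations of $K^P_M$.

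Next, I would bound the cost of a single application of $K^P_M$. This amounts to testing $(I, M) \models B(r)$ for each of the $m$ rules. The key step is showing that this test can be performed in time $O(2^k p(n))$: by the irrelevance property for atoms outside $D_{B(r)}$, the condition ``$J \models B(r)$ for all $I \subseteq J \subseteq M$'' is equivalent to ``$(I \cup J') \models B(r)$ for every $J' \subseteq (M \cap D_{B(r)}) \setminus (I \cap D_{B(r)})$.'' Since $|D_{B(r)}| \leq k$, there are at most $2^k$ such $J'$ to enumerate, each of which requires one satisfaction test of cost $p(n)$.

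Multiplying the three factors yields the claimed bound $O(2^k m^2 p(n))$: at most $m$ fixpoint iterations, each performing $m$ conditional-satisfaction checks, each in time $O(2^k p(n))$. Comparing the resulting fixpoint with $M$ is linear and absorbed into the bound.

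The only delicate point is the reduction of ``all $J$ with $I \subseteq J \subseteq M$'' (potentially exponentially many in $|M|$) to ``all $J'$ inside a domain of size at most $k$.'' This is not really an obstacle but it is the conceptual heart of the argument, and it relies crucially on the irrelevance assumption stated in Section~\ref{sec:syntax} that atoms outside $D_S$ do not affect modelhood of $S$. Once that reduction is made explicit, the complexity bookkeeping is routine and mirrors the proof of Proposition~\ref{prop:omega-check}, with the constant $4 = 2^2$ replaced by the parameter $2^k$.
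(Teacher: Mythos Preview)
Your proposal is correct and follows essentially the same route as the paper's proof: bound the number of fixpoint iterations by $m$, bound each iteration by $m$ rule evaluations, and bound each conditional-satisfaction test by $2^k$ ordinary satisfaction checks of cost $p(n)$. Your write-up is in fact more explicit than the paper's about why the $2^k$ bound holds (you spell out the reduction via the irrelevance property for atoms outside $D_{B(r)}$, whereas the paper simply points back to Proposition~\ref{prop:omega-check}); the one superfluous remark is that the sequence is ``bounded above by $M$,'' which need not hold for arbitrary $M$ but is unnecessary anyway since your head-atom argument already gives the $m$-step bound.
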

\begin{proof}
We show that the least fixpoint of $K^P_M$ can be computed in time $O(2^k m^2 p(n))$. 
In the worst case, each application of the operator derives at most one new atom,
and thus the fixpoint is reached after at most $m$ applications of the operator.
Each application requires at most the evaluation of all rules of $P$, and thus at most $m^2$ rule evaluations are sufficient.
To evaluate a rule, the truth of the body has to be checked w.r.t.\ at most $2^k$ interpretations (similar to Proposition~\ref{prop:omega-check}, in which $k=2$), each requiring $p(n)$ time.
We thus obtain the bound $O(2^k m^2 p(n))$.
\qed
\end{proof}

This means that actually most languages with non-convex structures exhibit a complexity gap between the FLP and PSP semantics. There is a uniformity issue here, which we informally noted earlier when examining the $\Pi^P_2$-hardness proof for cautious reasoning under PSP. We can now formalize this, as it follows from Theorem~\ref{thm:fixedParameterTractability} that we need an infinite number of inherently different non-convex structures in order to obtain $\Pi^P_2$ hardness.

\begin{corollary}
Let $\mathbf{S}$ be any finite set of structures, possibly including non-convex structures.
Cautious reasoning over datalog$^\mathbf{S}$ is in \coNP under the PSP semantics.
\end{corollary}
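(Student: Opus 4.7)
The plan is to leverage Theorem~\ref{thm:fixedParameterTractability} together with the standard ``guess and check'' argument already used in Proposition~\ref{prop:omega-cautious}. Since $\mathbf{S}$ is finite, the quantity $k := \max\{|D_S| \mid S \in \mathbf{S}\}$ is a well-defined constant. Every rule body in a datalog$^\mathbf{S}$ program is, by definition, either a conjunction of atoms or a variant $S\sigma$ of some $S \in \mathbf{S}$. By the requirement $D_{S\sigma} = \{\sigma(a) \mid a \in D_S\}$, variants preserve domain size, so every non-conjunctive body has domain bounded by $k$.

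Next, I would argue that PSP-answer-set checking for datalog$^\mathbf{S}$ programs runs in polynomial time. Computing the least fixpoint of $K^P_M$ requires at most $m$ iterations, and each iteration evaluates $(I,M) \models B(r)$ for every rule $r$. For bodies that are conjunctions of atoms, monotonicity gives $(I,M) \models B(r)$ iff $I \models B(r)$, a single polynomial-time test. For bodies $S\sigma$ with $|D_{S\sigma}| \le k$, the reasoning in the proof of Theorem~\ref{thm:fixedParameterTractability} applies verbatim: it suffices to evaluate $S\sigma$ on at most $2^k$ interpretations $I \cup J$ with $J \subseteq (M \cap D_{S\sigma}) \setminus (I \cap D_{S\sigma})$. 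Since $2^k$ is a constant and each satisfaction test is polynomial by assumption, each body check takes polynomial time, so the entire fixpoint computation is polynomial.

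With polynomial-time PSP-answer-set checking in hand, the corollary follows by the same coNP argument as in Proposition~\ref{prop:omega-cautious}: the complement of $P \models^{\mathit{PSP}}_c a$ is decided by guessing an interpretation $M$ with $a \notin M$ and verifying in polynomial time that $M$ is a PSP answer set, placing the complement in NP and hence cautious reasoning in \coNP.

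The only subtle point—and the main obstacle to a single-line appeal to Theorem~\ref{thm:fixedParameterTractability}—is that the theorem as literally stated requires a uniform bound on the domain of \emph{every} structure occurring in $P$, whereas conjunctions of atoms in a datalog$^\mathbf{S}$ program may have arbitrarily large domains. This is precisely why I would separate the two cases above: monotonicity of conjunctions of atoms collapses the $2^k$ enumeration to a single test, so the bound $k$ only needs to cover the non-conjunctive bodies, which are variants of structures in the finite set $\mathbf{S}$ and therefore uniformly bounded.
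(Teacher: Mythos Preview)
Your proposal is correct and follows essentially the same route that the paper intends: the corollary is stated without an explicit proof and is meant to follow directly from Theorem~\ref{thm:fixedParameterTractability} via the guess-and-check argument of Proposition~\ref{prop:omega-cautious}. Your treatment is in fact more careful than the paper's, since you explicitly address the issue that conjunctions of atoms in datalog$^\mathbf{S}$ may have unbounded domain and hence Theorem~\ref{thm:fixedParameterTractability} does not literally cover them; your separate handling of monotone conjunctions (collapsing conditional satisfaction to a single test) cleanly closes this gap, which the paper leaves implicit.
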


This means that there is also a clear difference in uniformity between the complexity boundary of the FLP and the PSP semantics, respectively. It also means that it is impossible to simulate the FLP semantics in a compact way using the PSP semantics on the class of programs with bounded domain structures, unless the polynomial hierarchy collapses to its first level. The general picture of our complexity results is shown in Figure~\ref{fig:complexity}. We can see that the complexity transition from $\coNP$ to $\Pi^P_2$ is different for the FLP and PSP semantics, respectively. The solid line between convex and non-convex structures denotes a crisp transition for FLP, while the dashed line between bounded non-convex and unbounded non-convex structures is a rougher transition.

\begin{figure}[t]
\centering
\tikzstyle{none} = [text centered]
\tikzstyle{full} = [draw, thick, rectangle]
\tikzstyle{bounded} = [draw, thick, dashed, ellipse]
\tikzstyle{convex} = [draw, thick, ellipse]

\definecolor{FLPcolor}{rgb}{.7,0,0}
\definecolor{PSPcolor}{rgb}{0,0,.7}

\begin{tikzpicture}
\node at (-1,.5) [none](convexL) {\textcolor{FLPcolor}{\coNP}};
\node at (1,.5) [none](convexR) {\textcolor{PSPcolor}{\coNP}};
\node at (0,.25) [none](convexT) {\emph{Convex}};
\node [convex, fit = (convexL) (convexR) (convexT)](convex) {};

\node at (-2,2) [none](boundedL) {\textcolor{FLPcolor}{$\Pi^P_2$}};
\node at (2,2) [none](boundedR) {\textcolor{PSPcolor}{\coNP}};
\node at (0,1.75) [none](boundedT) {\emph{Bounded domain}};
\node [bounded, fit = (boundedL) (boundedR) (boundedT) (convex)](bounded) {};

\node at (-3,3) [none](fullL) {\textcolor{FLPcolor}{$\Pi^P_2$}};
\node at (3,3) [none](fullR) {\textcolor{PSPcolor}{$\Pi^P_2$}};
\node[full] [fit = (fullL) (fullR) (bounded)] {};

\node at (-2,3.75) [none] {\textcolor{FLPcolor}{\textbf{FLP semantics}}};
\node at (2,3.75) [none] {\textcolor{PSPcolor}{\textbf{PSP semantics}}};

\end{tikzpicture}
\caption{Complexity of cautious reasoning}
\label{fig:complexity}
\end{figure}
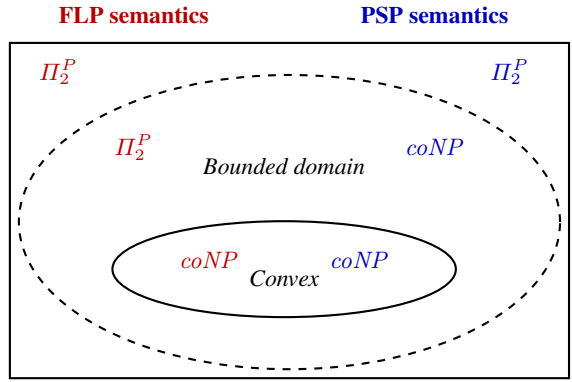

\section{Discussion} \label{sec:discussion}

Looking at Figure~\ref{fig:complexity}, the transition from $\coNP$ to
$\Pi^P_2$ appears somewhat irregular for PSP, as the availability of
single non-convex structures does not cause the transition, but only
their union. However, in practice the availability of an infinite
number of different structures is not unusual: indeed, if aggregates are considered, the presence of one aggregate function and suitable comparison relations usually gives rise to such an infinite repertoire of structures.

\begin{example}
Consider the availability of $\mathit{COUNT}$ over any set of atoms and the comparison relation $\neq$. The structures generated by aggregates of the form $\mathit{COUNT}(S) \neq i$ do not have a bound on the domains of non-convex aggregates. Indeed, for any structure $\mathit{COUNT}(\{a_1, \ldots, a_k\}) \neq 1$, which is non-convex and for which the domain size is $k$, one can formulate also $\mathit{COUNT}(\{a_1, \ldots, a_{k+1}\}) \neq 1$, which is also non-convex and has a larger domain.
\end{example}

However, as noted earlier, for expressing $\Pi^P_2$-hard problems, one
needs a non-uniform approach for PSP, in the sense that a dedicated
aggregate has to be formulated for each problem instance, whereas for
FLP one can re-use the same aggregates for all problem instances.

In practical terms, our results imply that for programs containing only convex
structures, techniques as those presented in
\cite{alvi-etal-2011-jair-aggregates} for FLP can be used for computing answer
sets also for PSP, and techniques presented for PSP can be used for FLP in turn.
It also means that this is the largest class for which this can be done with currently available methods in an efficient way.
There are several examples for convex structures that are
easy to identify syntactically: count aggregates with equality guards,
sum aggregates with positive summands and equality guards, dl-atoms
that do not involve $\capminus$ and rely on a tractable Description
Logic \cite{eite-etal-2008-AIJ}. However many others are in general not
convex, for example sum aggregates that involve both positive and negative
summands, times aggregates that involve the factor 0, average
aggregates, dl-atoms with $\capminus$, and so on. It is still possible
to find special cases of such structures that are convex, but that
 requires deeper analyses.

The results also immediately imply impossibility and possibility results for
rewritability: unless the polynomial hierarchy collapses to its first
level, it is not possible in the FLP semantics to rewrite a program with non-convex
structures into one containing only convex structures (for example, a
program not containing any generalized atoms), unless disjunction or
similar constructs are allowed in rule heads. On the other hand, such rewritings are possible for the PSP semantics if the non-convex structures are guaranteed to have bounded domains. 
This seems to be most
important for dl-programs, where such rewritings are sought after.

The semantics considered in this paper encompass several approaches suggested for programs that couple answer set programming with description logics. The approaches presented in \cite{eite-etal-2005-ijcai} and \cite{luka-2010-tkde} directly employ the FLP semantics, while the approach of \cite{shen-wang-2012-aaai} is shown to be equivalent to the PSP semantics. There are other proposals, such as  \cite{eite-etal-2008-AIJ}, which appears to be different from both FLP and PSP already on convex structure. In future work we plan to relate also these other semantics with FLP and PSP and attempt to identify the largest coinciding classes of programs.

\end{document}